%% file: ijcai26.tex
\documentclass{article}
\input{packs}
\input{define}

\usepackage{ijcai26}

\usepackage{times}
\usepackage{soul}
\usepackage{url}
\usepackage[hidelinks]{hyperref}
\usepackage[utf8]{inputenc}
\usepackage{caption}
\usepackage{graphicx}
\usepackage{amsmath}
\usepackage{amsthm}
\usepackage{booktabs}
\usepackage{algorithm}
\usepackage{algorithmic}
\usepackage[switch]{lineno}

\newtheorem{theorem}{Theorem}

\title{Modeling Dynamic Mixtures of Time-Delay Systems from Streaming Time Series}

\author{
    Author Name
    \affiliations
    Affiliation
    \emails
    email@example.com
}

\author{
Ren Fujiwara
\and
Yasuko Matsubara\and
Yasushi Sakurai\\
\affiliations
SANKEN, The University of Osaka, Japan\\
\emails
\{r-fujiwr,yasuko,yasushi\}@sanken.osaka-u.ac.jp
}

\begin{document}

\maketitle
\begin{abstract}
\input{000abstract}
\end{abstract}

\section{Introduction}
\label{sec: 01_intro}
\input{010intro}

\section{Proposed Method}
\label{sec: 02_model}
\input{020model}

\section{Theoretical Analysis}
\label{sec:theory}
\input{030analysis}

\section{Experimental Results}
\label{sec: 04_exp}
\input{040experiments}

\section{Related Work}
\label{sec: 08_related}
\input{080related}

\section{Conclusion}
\label{sec: 09_conclusion}
\input{090conclusion}
\section*{Acknowledgements}
This work was supported by
JSPS KAKENHI Grant-in-Aid for Scientific Research Number
JP26H02499,
JST CREST JPMJCR23M3, 
JST  K Program JPMJKP25Y6,
JST COI-NEXT JPMJPF2009, 
JST COI-NEXT JPMJPF2115,
Future Social Value Co-Creation Project - Osaka University 
.
\bibliographystyle{named}
\bibliography{
BIB/DelayMix,
BIB/temp
}
\clearpage
\appendix
\section*{Appendix}
\input{0A_appendix}
\end{document}

%% file: packs.tex
\def\equationautorefname~#1\null{Equation~(#1)\null}
\usepackage{times}
\usepackage{soul}
\usepackage{url}
\usepackage[hidelinks]{hyperref}
\usepackage[utf8]{inputenc}
\usepackage[small]{caption}
\usepackage{graphicx}
\usepackage{booktabs}
\usepackage[switch]{lineno}

\usepackage{breakurl}
\usepackage{bm}
\usepackage{xspace}
\usepackage{algorithmic}
\usepackage{algorithm}
\usepackage{comment}
\usepackage{autobreak}

\usepackage{utfsym}

\usepackage{adjustbox}
\usepackage{array}
\usepackage{booktabs}
\usepackage{multirow}

\usepackage{array,graphicx}
\usepackage{booktabs}
\usepackage{pifont}

\definecolor{lightgray}{gray}{0.85}
\usepackage{multirow} 
\usepackage{mathtools}

\usepackage{breqn}
\usepackage{enumerate}
\usepackage{physics}

\usepackage{amsmath}
\usepackage{amsthm}
\usepackage{amsfonts}


%% file: define.tex
\newtheorem{problem}{Problem}
\newtheorem{definition}{Definition}

\newtheorem{lemma}{\textsc{Lemma}}

\newcommand{\mypara}[1]{\noindent{\textbf{#1.}}}

\newcommand{\method}{\text{DelayMix}\xspace}

\newcommand{\Mcollect}{DynamicMomentCollection\xspace}
\newcommand{\Madapt}{ModelDatabaseAdaption\xspace}
\newcommand{\Fpred}{FuturePrediction\xspace}

\newcommand{\datadim}{d\xspace}
\newcommand{\sdim}{k\xspace}

\newcommand{\cdim}{d_c\xspace}

\newcommand{\len}{l}

\newcommand{\data}{\mathbf{X}}
\newcommand{\ctr}{\mathbf{U}}

\newcommand{\datapoint}{x}
\newcommand{\ctrpoint}{u}
\newcommand{\datavec}{\mathbf{\datapoint}}
\newcommand{\ctrvec}{\mathbf{\ctrpoint}}
\newcommand{\statepoint}{z}
\newcommand{\statevec}{\mathbf{\statepoint}}

\newcommand{\estpoint}{\hat{x}}
\newcommand{\estvec}{\mathbf{\estpoint}}

\newcommand{\lstep}{l_s}
\newcommand{\fdata}{\data^f}

\newcommand{\lMat}{\mathbf{A}}

\newcommand{\cMat}{\mathbf{B}}

\newcommand{\oMat}{\mathbf{C}}

\newcommand{\lag}{\tau}

\newcommand{\sCov}{\mathbf{\Gamma}}
\newcommand{\oCov}{\mathbf{R}}

\newcommand{\newt}{t_n}
\newcommand{\trank}{R}

\newcommand{\msCov}{\mathbf{P}}
\newcommand{\msB}{\mathbf{V}}
\newcommand{\fwmu}{\hat{\boldsymbol \mu}}
\newcommand{\filmu}{\boldsymbol \mu}

\newcommand{\ord}{\mathcal{O}}

\newcommand{\iparams}{\{\lMat_i,\cMat_i, \oMat_i\}}
\newcommand{\regimeset}{\theta}
\newcommand{\datac}{\data^c}
\newcommand{\cctr}{\ctr^c}
\newcommand{\nregimes}{R\xspace}

\newcommand{\systemtensor}{\mathcal{S}}

\newcommand{\fone}{\mathbf{q}^{(1)}}
\newcommand{\ftwo}{\mathbf{q}^{(2)}}
\newcommand{\fthree}{\mathbf{q}^{(3)}}

\newcommand{\mone}{\mathbf{m}^{(1)}}
\newcommand{\mtwo}{\mathbf{m}^{(2)}}
\newcommand{\mthree}{\mathbf{m}^{(3)}}

\DeclareMathOperator*{\argmin}{\text{arg}\,\text{min}}

\definecolor{lightgray}{gray}{0.85}

%% file: 000abstract.tex
This research addresses the problem of adaptive modeling in time-series data streams with clear input-output relationships. This problem is challenging because rapid system changes (regime shifts) caused by environmental factors or input delay changes degrade model performance, and the trade-off among accuracy, robustness, and memory usage arises when using multiple small models for each time-series pattern. To address these issues, this paper presents an online framework/method that treats streaming time series as dynamic mixtures of time-delay systems. This framework maintains robustness of model tracking and reduces memory usage by summarizing past regimes using a fixed-length representation that captures both the system dynamics and input-output delays. Concretely, this approach constructs a summary system tensor using the system's Markov parameter series, capturing both dynamic behavior and delay characteristics. If necessary, a tensor decomposition algorithm extracts relevant past models from the tensor and helps select the system that best fits the current regime. This method enables rapid adaptation to environmental changes and is computationally efficient. Tests on real datasets show that \method consistently outperforms other methods, achieving superior forecast accuracy and faster adaptation to delays, especially for highly non-stationary data.

%% file: 010intro.tex
In recent years, the importance of modeling time-series data streams with explicit input-output relationships and modeling based on such data streams has grown. In particular, in real-world systems such as ship control~\cite{li2022hybrid}, industrial robots~\cite{lu2021gan}, data center cooling~\cite{datacentor}, diesel engine control~\cite{engine}, and autonomous driving~\cite{drivectr}, modeling failures can easily manifest as failures in safe and efficient operation, necessitating the development of methods for stable and efficient modeling on time-series data streams.

However, modeling on streams simultaneously faces the challenges of data nonstationarity and computational resource constraints, making it difficult to directly apply models and algorithms designed for offline use. Specifically, the following two problems arise: (1) Changes in the underlying system often cause abrupt model changes, or regime shifts, rapidly degrading the performance of trained models \cite{RegimeShift,RegimeShift2,RegimeShiftNature}. Furthermore, this challenge is further complicated by the unknown input delay inherent in systems with input-output relationships ~\cite{ARIMAX,MPC,kLinReg,QLag,SINDyc,DMDc,ReLiNet}. This means that changes in the model's input delays must be simultaneously tracked. (2) Time-series data streams have no temporal boundaries (i.e., they are semi-infinitely long), making model estimation using all past data prone to design errors. Furthermore, models designed using multiple small models for each time-series pattern suffer from a trade-off between robustness of accuracy and memory usage. While retaining more models stabilizes accuracy, having too many models increases the model search cost and memory computational complexity without bound.

For example, current stream prediction models that consider regime shifts offer scalable algorithms in terms of model adaptability and computational time on the stream \cite{RegimeCast,regime_other,ModePlait,MicroAdapt}. However, these methods monotonically increase the number of models, thereby increasing not only memory usage but also the computational time required for model switching. Furthermore, these methods do not consider exogenous variables. 
In other words, these methods lack design principles for tracking changes in input delay, making them difficult to directly apply to our problem.
Recent neural approaches, such as ReLiNet~\cite{ReLiNet} and TimeXer~\cite{TimeXer}, incorporate exogenous signals through switched linear systems and Transformer-based attention, respectively, achieving competitive results. However, these models generally do not model time delays explicitly and tend to struggle in streaming settings due to high computational cost and frequent retraining. Therefore, this paper aims to develop a method that simultaneously satisfies both adaptability to sudden environmental changes and scalable estimation. The research question in this paper is: 

\textit{Can we develop a method that achieves high-accuracy modeling of time-series data streams with explicit input-output relationships, while tracking rapid environmental changes and maintaining high computational efficiency?}

We propose \method
, an online framework that resolves the conflict between structural identification and streaming adaptation. This framework maintains high model tracking performance while avoiding increased memory usage by retaining model sets used in past regimes as mixtures of fixed-length summarized representations of the system's dynamic behavior and input-output delay structure. Specifically, we focus on the fact that the system's Markov parameter series integrates its dynamic behavior and input-output delay structure, and we construct a system tensor that summarizes them. Then, if necessary, we use a tensor decomposition algorithm to extract useful past models from the system tensor and adaptively select the system that best fits the current regime. This approach achieves both tracking rapid environmental changes and high computational efficiency.

In summary, \method is an online algorithm for modeling streaming time series as mixtures of time-delay systems. Our main contributions are as follows:
\begin{itemize}
\item \textbf{Streaming Modeling of Time-Delay Systems:} We propose a new formulation that leverages Markov parameter sequences to jointly learn unknown delays and system dynamics in streaming environments.
\item \textbf{Scalability with Resource Constraints:} The computation time of \method is independent of the time series length, thanks to incremental model updating, and thus is a faster algorithm than its competitors.
\item \textbf{Robustness Against Structural Drift:} Experiments on real-world datasets demonstrate that our method significantly outperforms state-of-the-art baselines in predictive accuracy, especially in environments with strong nonstationarity.
\end{itemize}

%% file: 020model.tex
In this paper, we study time series arising from mixtures of time-delay MIMO systems, where the delays are unknown and may differ across systems.
Our goal is to recover the dynamics of each system from one observed sequence and use them for online multi-step forecasting.
Rather than estimating delays directly, we estimate Markov parameters for each regime to summarize input–output behavior. For each regime, we then create a delay-free input–output equivalent state-space model, which we use for inference and prediction.

This section introduces the proposed model and the algorithm. First, in section~\ref{subsec:mixture-model}, we define our model. Then we formalize the problem definition in Problem~\ref{prob:online-forecasting}. We then present the proposed online algorithm \method, which includes sub-algorithms named \Mcollect, \Madapt, and \Fpred.
The theoretical justification for our method is given in Section~\ref{sec:theory}.

\subsection{Mixtures of Time-Delay Systems}
\label{subsec:mixture-model}
In practice, time-series data rarely come from just one stationary dynamical system.
Instead, the underlying dynamics often switch between several regimes, depending on hidden conditions
such as the operating mode, environment, or user behavior.
To capture this non-stationarity, we model the data as coming from a mixture of time-delay systems.

Let $\nregimes$ denote the number of regimes, and let $r_t \in \{1,\dots,\nregimes\}$ denote the (unobserved) regime index at time $t$.
For each regime $i \in \{1,\dots,\nregimes\}$, we introduce a latent state
$\statevec_{i}(t) \in \mathbb{R}^{\sdim}$ and system matrices
$
    \lMat_{i} \in \mathbb{R}^{\sdim \times \sdim},
    \cMat_{i} \in \mathbb{R}^{\sdim \times \cdim},
    \oMat_{i} \in \mathbb{R}^{\datadim \times \sdim}.
$
We also include an integer-valued delay $\lag_{i} \ge 0$ in the state update.
Conditioned on being in regime $r_t=i$ at time $t$, the dynamics evolve as
\begin{align}
    \statevec_{i}(t+1) & = \lMat_{i} \statevec_{i}(t) + \cMat_{i} \ctrvec(t-\lag_{i}),
    \label{eq:mixture_state}\\
    \datavec(t) & = \oMat_{i} \statevec_{i}(t),
    \label{eq:mixture_output}
\end{align}
where $\ctrvec(t) \in \mathbb{R}^{\cdim}$ is the observed exogenous input and
$\datavec(t) \in \mathbb{R}^{\datadim}$ is the observed output.
The input $\ctrvec(t)$ is observed at every time step, but the regime index $r_t$ and the latent
states $\statevec_{i}(t)$ are not directly observed. We use $\estvec(t)$ for model predictions
(for example, predictions produced by filtering) when needed later.

\mypara{Regime-specific Markov parameters and mixture view}
For each regime, the time-delay system \eqref{eq:mixture_state}--\eqref{eq:mixture_output} yields the following equivalent Markov parameters (impulse response):
\begin{align}
              \nonumber
    \left\{ \mathbf{g}^{(i)}_1, \mathbf{g}^{(i)}_2, \dots \right\}, \qquad
    \mathbf{g}^{(i)}_\ell = \oMat_{i} \bigl(\lMat_{i}\bigr)^{\ell-1} \cMat_{i}.
\end{align}
When an explicit delay $\lag_i$ exists in the system, the Markov parameters exhibit a characteristic pattern: in the ideal noise-free case, the first $\lag_i$ Markov parameters vanish, and nonzero responses begin at delay $\lag_i+1$.
This relationship between the Markov parameters and the time delay suggests that simply estimating the Markov parameters, without explicitly accounting for the delay, yields an estimate of the time-delay system. Furthermore, estimating the regimes represented by the time-delay system can be reformulated as the problem of estimating mixtures of Markov parameters from the data. Here, in \cite{LDSmeetsCTR}, it was theoretically shown that mixtures of Markov parameters in observed data can be extracted by decomposing a properly generated third-order tensor (i.e., the system tensor) as a superposition of rank-1 tensors. In this work, we construct an algorithm that applies this theoretical framework to regime estimation for time-delay systems in real-world streaming settings.
In the rest of this section, we focus on the proposed algorithm.

\begin{figure*}[t]
    \centering
    \includegraphics[width=\linewidth]{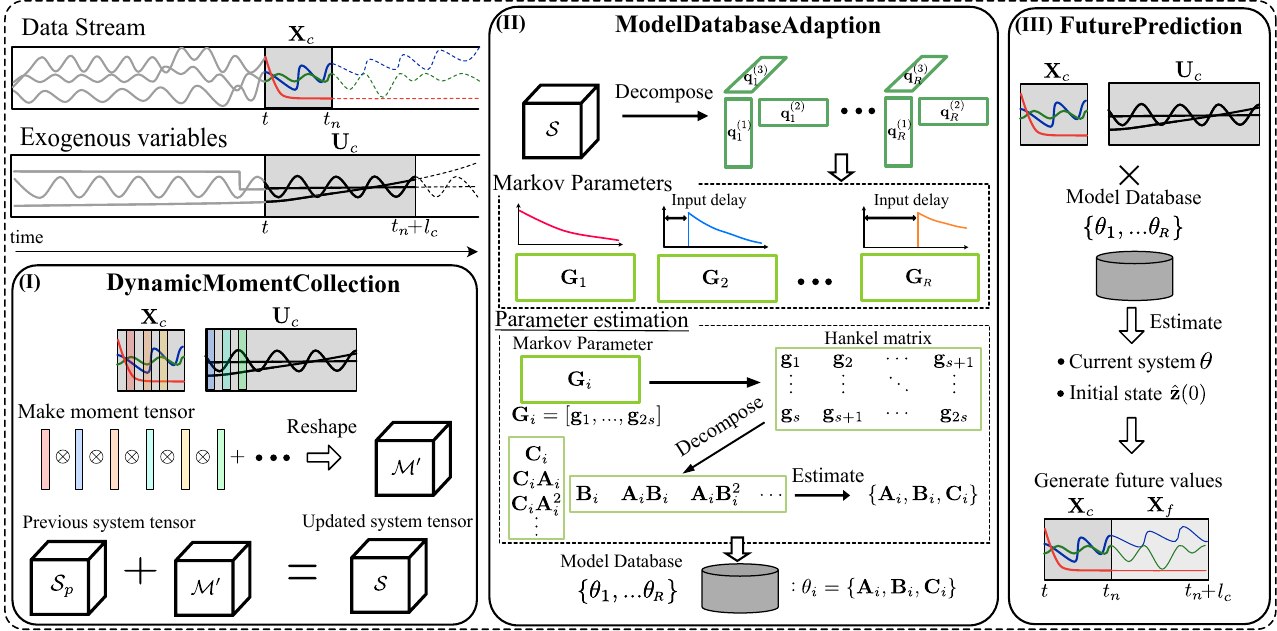}
    \caption{
        Overview of \method.
        Given the current data window $\datac$ and the current exogenous variables $\cctr$, the proposed method incrementally constructs a moment-based system tensor, decomposes it into regime-specific Markov parameters, realizes for each regime an equivalent delay-free state-space model, and uses these models to forecast future trajectories.
    }
    \label{fig:overview}
\end{figure*}

\subsection{Online Forecasting with Moment-Based System Tensors}
\label{subsec:online-forecasting}

We now explain how \method\ uses moment-based system tensors for online modeling and
the main idea is to keep an up-to-date empirical estimate of a system tensor at each update that summarizes the higher-order moments of $(\datavec,\ctrvec)$, and to periodically factorize this tensor to update a mixture of delay-free state-space models.
We then use these models to infer the current dynamical regime and to forecast future outputs
based on the exogenous inputs.

\subsubsection{Data Windows and Exogenous Variables}

We assume that the data stream $\data$, denoted as $\{\datavec(t)\}_{t \ge 1}$, arrives sequentially. 
The algorithm runs at set update times, processing a sliding window of the most recent data.

\begin{definition}[Current data window: $\datac$]
Let $\datac = \data[t:\newt]$ denote the subsequence of length $\len_c$ extracted
from the observed time series, starting at time index $t$ and ending at $\newt$.
\end{definition}

\begin{definition}[Current exogenous variables: $\cctr$]
Let $\cctr = \ctr[t:\newt+\lstep]$ denote the subsequence of exogenous inputs
available at the current update, where $\lstep$ is the forecasting horizon.
The segment $\ctr[t:\newt]$ is used for updating the model, and
$\ctr[\newt+1:\newt+\lstep]$ is used for forecasting.
\end{definition}

We focus on modeling streaming data as a mixture of time-delay systems and on using dynamics for forecasting, which is one of the primary applications of \method.
At each update, the algorithm observes the current data window $\datac$ and the corresponding
exogenous input sequence $\cctr$, along with the model estimated up to the previous update.
With this information, our goal is to refine the current system mixture and predict the $\lstep$-step-ahead outputs.

We formalize the problem as follows.

\begin{problem}
\label{prob:online-forecasting}
    \textbf{Given} the current data window $\datac$, the exogenous input sequence
    $\cctr$ sampled at regular time intervals, the previously estimated model
    parameters $\regimeset_p$, and the previous system tensor $\systemtensor_p$, \textbf{at each update} do the following:
    \begin{itemize}
        \item Update the system tensor $\systemtensor$ and the current
              system parameters
              \begin{align}
              \nonumber
                  \regimeset = \bigl\{\lMat_{c}, \cMat_{c}, \oMat_{c}\bigr\}
              \end{align}
              to better fit the newly observed data;
        \item Predict the future outputs $\fdata = \data[\newt+1:\newt+\lstep]$
              given the future inputs $\ctr[\newt+1:\newt+\lstep]$;
        \item Maintain computational efficiency and bounded memory usage as new data arrives.
    \end{itemize}
\end{problem}

To solve Problem~\ref{prob:online-forecasting}, \method\ organizes the online
estimation and forecasting pipeline into three parts:
\Mcollect\ (system tensor collection), \Madapt\ (online model adaptation via tensor
factorization and system realization), and \Fpred\ (forecasting trajectories with the
delay-free models).
We describe each part in turn. Figure~\ref{fig:overview} shows an overview of \method, from the current data window and exogenous inputs to the updated models and forecasts.

The goal of \Mcollect is to maintain an up-to-date empirical estimate of the global system tensor $\systemtensor$ using streaming input-output data. We build structured higher-order moments to capture the temporal dependencies from the underlying switching dynamical systems. Our approach uses a compact method inspired by~\cite{LDSmeetsCTR} that does not require storing raw data. \Mcollect keeps running estimates of the necessary moment blocks and updates $\systemtensor$ incrementally as new samples arrive. This keeps memory usage low while preserving a tensor whose CP decomposition can recover the regime-specific Markov parameters, up to scaling and permutation.
Concretely, for a sub-window start time $\tau$ and a triplet $(k_1,k_2,k_3)\in\{1,\dots,k_{\max}\}^3$, define

$
t_1=\tau+k_1,
t_2=\tau+k_1+k_2+1,
t_3=\tau+k_1+k_2+k_3+2,
\tilde t_1=\tau,
\tilde t_2=\tau+k_1+1,
\tilde t_3=\tau+k_1+k_2+2.
$
Using these indices, we form empirical sixth-order moments over three output--input pairs and, for computational efficiency, we group each pair into a single mode using
$\mathrm{vec}(\datavec(t_j)\ctrvec(t_j)^\top)(j=1,2,3)$, yielding a third-order moment block
$\mathcal{M}'(k_1,k_2,k_3)\in\mathbb{R}^{p\times p\times p}$.
We combine these blocks across valid $\tau$ and triplets to form the system tensor $\systemtensor$ using an incremental update.

\subsubsection{\Madapt: Online Model Adaptation via Tensor Factorization}

The goal of ModelDatabaseAdaption is to find and update the
set of dynamical systems \(\theta\) from a moment-based tensor representation.
While \Mcollect\ keeps updating $\systemtensor$, extracting separate system modes requires
a more involved tensor decomposition and system realization step. This step is triggered only when
a change in dynamics is detected or when enough new data have been collected.

The adaptation procedure has the following steps:
\begin{enumerate}
    \item \textbf{Tensor decomposition:}
          We apply the Alternating Least Squares (ALS) algorithm to the system tensor
          $\systemtensor$ to approximate it by a rank-$\trank$ CP decomposition
          \begin{align}
              \nonumber
              \systemtensor \approx \sum_{i=1}^{\trank}
                  \fone_i \otimes \ftwo_i \otimes \fthree_i,
          \end{align}
          where $(\fone_i, \ftwo_i, \fthree_i)$ are factor vectors associated with component $i$.
          Under the assumptions stated in Theorem~\ref{thm:moment-cp-compact}, the rank-one components
          can be associated with individual regimes and their stacked Markov parameters.
    \item \textbf{Markov parameter reconstruction and delay-free realization:}
          For each component $i$, we rearrange the factor vectors
          $(\fone_i, \ftwo_i, \fthree_i)$ into a sequence of estimated Markov parameters
          $\{\hat{\mathbf{g}}^{(i)}_k\}_{k=1}^{K}$ by following the construction of the system
          tensor (see also Section~\ref{sec:theory}).
          The initial zero pattern in this sequence implicitly encodes the effective delay for
          regime $i$, but we do not explicitly estimate the delay.
          Instead, we treat $\{\hat{\mathbf{g}}^{(i)}_k\}$ as the Markov parameters of an
          input--output equivalent system and apply a standard realization procedure, such as the
          Ho--Kalman algorithm \cite{HoKalman} to obtain a delay-free state-space model:
          \begin{align}
          \nonumber
              \bigl\{\lMat_{i}, \cMat_{i}, \oMat_{i}\bigl\}
              = \text{Ho-Kalman}\bigl(\{\hat{\mathbf{g}}^{(i)}_k\}_{k=1}^{K}\bigr).
          \end{align}
          By Theorem~\ref{thm:equivalence} in Section~\ref{sec:theory}, the resulting delay-free model
          $\bigl\{\lMat_{i}, \cMat_{i}, \oMat_{i}\bigl\}$ is input--output equivalent to a time-delay system and thus it faithfully captures the behavior of regime $i$.
          Finally, these parameters can be optionally fine-tuned so as to better fit the current
          data $\datac$, and the collection of all such systems constitutes the updated set
          $\regimeset$.
\end{enumerate}

In summary, \Madapt uses tensor factorization to separate regime-specific Markov parameters
from the system tensor and then realizes, for each regime, a delay-free state-space model that is suitable for inference and forecasting.

\subsubsection{\Fpred: Forecasting Trajectories with Delay-Free Models}

The goal of \Fpred is to infer a suitable initial state for the currently active system and use it, together with the learned models, to generate future values.
Given the updated set of delay-free systems $\regimeset$ and the current data window $(\datac,\cctr)$, we first infer the active regime and its latent states, for example, by running a bank of Kalman filters and smoothers, one per regime.
We utilize the forward-pass (filter) and backward-pass (smoother) equations.

The backward pass gives a smoothed estimate of the state trajectory, including an estimate
of the current (or initial) state $\hat{\statevec}(0)$ for the active delay-free system.
Using this state estimate and the identified system matrices $\bigl\{\lMat_{i}, \cMat_{i}, \oMat_{i}\bigl\}$,
we then simulate the delay-free dynamics forward in time, using the future inputs
$\ctr[\newt+1:\newt+\lstep]$, to generate the $\lstep$-step-ahead future values $\fdata$.

By Theorem~\ref{thm:equivalence},
each delay-free model
$\bigl\{\lMat_{i}, \cMat_{i}, \oMat_{i}\bigl\}$ obtained in this way is input--output equivalent to a
time-delay system representing regime $i$. This means forecasting with these delay-free models is
theoretically justified.

%% file: 030analysis.tex
In this section, we explain the theoretical foundations behind the modeling approach.
We also discuss the assumptions and algorithmic design choices introduced earlier.
We focus on two main components that are central to \method.
First, we represent each time-delay MIMO system, which corresponds to one regime in the mixture model,
using a delay-free state-space model built from its Markov parameters.
Second, we recover the regime-specific Markov parameters from a moment-based system
tensor using CP decomposition.
We then analyze the computational complexity of the online algorithm \method.

\subsection{Equivalence of a Time-Delay System to a Standard MIMO Model}
We begin by formalizing the relationship between a time-delay MIMO system and an
equivalent delay-free state-space model.
Recall from Section~\ref{subsec:mixture-model} that each regime $i$ in the mixture model
is described by a time-delay MIMO system. For clarity, we first focus on a single system and
drop the regime index $i$.
The equivalence result in this subsection supports the realization step in
\Madapt\ and the forecasting step \Fpred\ in \method.

Consider a single segment of a time-delay system described by
\begin{align}
    \statevec(t+1) & = \lMat'\statevec(t) + \cMat'\ctrvec(t-\lag), \label{eq:sta_delayed} \\
    \datavec(t) & = \oMat'\statevec(t), \label{eq:obs_delayed}
\end{align}
where $\lag \ge 0$ represents an explicit delay in the state update,
$\lMat' \in \mathbb{R}^{\sdim \times \sdim}$ governs the latent dynamics,
$\cMat' \in \mathbb{R}^{\sdim \times \cdim}$ captures the influence of the delayed input
$\ctrvec(t-\lag)$, and $\oMat' \in \mathbb{R}^{\datadim \times \sdim}$ maps the latent state
to the observation.
For simplicity, we do not include direct-feedthrough terms or additive noise.

The Markov parameters of this delayed system are defined by its impulse response to an input
that is zero everywhere except at a single time step. They are given by
\begin{align}
\nonumber
    h_j =
    \begin{cases}
        0, & \text{for } 1 \le j \le \lag, \\
        \oMat'(\lMat')^{j-\lag-1}\cMat', & \text{for } j > \lag.
    \end{cases}
\end{align}
The first $\lag$ zeros represent the delay, while the subsequent nonzero entries characterize the response after the delay.
It is well known that there exists a delay-free state-space realization
whose Markov parameters match this sequence.
In other words, the explicit delay can be included in an augmented state.

The next theorem shows that any minimal delay-free realization with matching Markov parameters
is input-output equivalent to the original delayed system.

\begin{theorem}
\label{thm:equivalence}
Suppose that the delayed system~\eqref{eq:sta_delayed}–\eqref{eq:obs_delayed} is minimal
(reachable and observable).
Let $(\lMat, \cMat, \oMat)$ be any minimal delay-free state-space realization whose Markov
parameters $\{\oMat \lMat^{j-1} \cMat\}_{j\ge 1}$ coincide with those of the delayed system.
Then the delayed and delay-free systems are input--output equivalent:
for any input sequence and any initial state of the delayed system,
there exists a corresponding initial state of the delay-free realization
such that the resulting output sequences coincide.
\end{theorem}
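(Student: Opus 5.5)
The plan is to write both output sequences as a free response plus a forced convolution. Matching Markov parameters will take care of the forced part. The real work is to show that every free response the delayed system can produce is also a free response of the minimal delay-free realization, and I will do this through the Hankel matrix $\mathcal{H}=[h_{i+j-1}]_{i,j\ge 1}$.

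First I fix what ``initial state of the delayed system'' means. Because of the delay, the future of \eqref{eq:sta_delayed}--\eqref{eq:obs_delayed} from time $0$ is determined by $\statevec(0)$ together with the buffered inputs $\ctrvec(-\lag),\dots,\ctrvec(-1)$. I take this pair as the initial condition; the case with no buffered inputs is a special case. Iterating \eqref{eq:sta_delayed} gives, for $t\ge 0$,
\begin{align}
\nonumber
\datavec(t)=\underbrace{\oMat'(\lMat')^{t}\statevec(0)+\sum_{s=-\lag}^{-1} h_{t-s}\,\ctrvec(s)}_{y_{\mathrm{free}}(t)}+\sum_{s=0}^{t-1} h_{t-s}\,\ctrvec(s).
\end{align}
For the delay-free system started at $\mathbf{x}_0$, the same computation gives $\oMat\lMat^{t}\mathbf{x}_0+\sum_{s=0}^{t-1}\oMat\lMat^{t-s-1}\cMat\,\ctrvec(s)$. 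The two forced sums agree term by term by hypothesis. It therefore suffices to find $\mathbf{x}_0$ with $\oMat\lMat^{t}\mathbf{x}_0=y_{\mathrm{free}}(t)$ for all $t\ge 0$.

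Next I express $y_{\mathrm{free}}$ as a finite combination of shifted Markov parameters. Since $(\lMat',\cMat')$ is reachable, $\statevec(0)=\sum_{m=0}^{\sdim-1}(\lMat')^{m}\cMat' v_m$ for some vectors $v_m$. Then, for all $t\ge 0$,
\begin{align}
\nonumber
\oMat'(\lMat')^{t}\statevec(0)=\sum_{m}\oMat'(\lMat')^{t+m}\cMat' v_m=\sum_m h_{t+m+\lag+1}\,v_m .
\end{align}
The buffer term $\sum_{s} h_{t-s}\ctrvec(s)$ already has this form. Hence $y_{\mathrm{free}}(t)=\sum_{j=1}^{N} h_{t+j}\,w_j$ for some finite $N$ and vectors $w_j$. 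In other words, the stacked free response is $\mathcal{H}w$ for a finitely supported vector $w$.

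Now I use minimality of $(\lMat,\cMat,\oMat)$. Its Markov parameters are $h_j=\oMat\lMat^{j-1}\cMat$, so $h_{t+j}=\oMat\lMat^{t}\lMat^{j-1}\cMat$. Setting $\mathbf{x}_0=\sum_j \lMat^{j-1}\cMat\,w_j$ gives $\oMat\lMat^{t}\mathbf{x}_0=y_{\mathrm{free}}(t)$ for every $t\ge 0$, which finishes the argument. In this direction only the factorization $h_j=\oMat\lMat^{j-1}\cMat$ is actually needed. Minimality of the delay-free realization would matter for uniqueness of $\mathbf{x}_0$ and for the converse direction; there, observability gives uniqueness and reachability lets every delay-free state be matched by a delayed one through the same Hankel argument, with $\lag$ zero inputs prepended. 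I would mention this in a remark.

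The main obstacle I expect is the bookkeeping in the second step. One has to pin down the delayed system's initial condition correctly, including the input buffer, and check the index shifts $h_{t+m+\lag+1}$ and $h_{t-s}$ carefully, using the zero pattern $h_j=0$ for $j\le\lag$. Reachability of the delayed system is exactly what lets an arbitrary $\statevec(0)$ be absorbed into the column space of $\mathcal{H}$. The observability hypothesis on the delayed system is not used.
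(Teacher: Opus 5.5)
Your proof is correct, and it follows a different route from the paper's.

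\textbf{The paper's route.} It argues in two steps. First it exhibits a delay-free realization of the delayed system by augmenting the latent state with a buffer of the last $\lag$ inputs (Lemma~\ref{thm:markov}). It then invokes the classical fact that minimal realizations with identical Markov parameters are related by a similarity transformation, which carries initial states from one realization to the other.

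\textbf{Your route.} You never compare two realizations. You separate off the forced response. You then use reachability of $(\lMat',\cMat')$ to write the free response as $\mathcal{H}w$ with $w$ finitely supported. This includes the contribution of the buffered inputs $\ctrvec(-\lag),\dots,\ctrvec(-1)$, which the paper leaves implicit. Finally you read off $\mathbf{x}_0=\sum_j \lMat^{j-1}\cMat\,w_j$ directly from the factorization $h_j=\oMat\lMat^{j-1}\cMat$.

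\textbf{What each buys.} When the similarity route applies, it gives more: an invertible state correspondence, hence uniqueness and the converse at once. But it needs the augmented realization itself to be minimal, and that does not follow from minimality of $(\lMat',\cMat',\oMat')$. A buffered input in $\ker\cMat'$ never affects the output. Even with injective $\cMat'$, minimality can fail. For example, take $\lMat'$ the $2\times 2$ swap matrix, $\cMat'=e_1$, $\oMat'=e_1^\top$ and $\lag=1$. This system is minimal, with Markov parameters $0,1,0,1,\dots$ and transfer function $1/(z^2-1)$ of McMillan degree $2$, while the augmented state has dimension $3$.

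Your construction avoids this issue entirely. Your accounting of the hypotheses is also accurate. Existence needs only reachability of the delayed system and the factorization property of $(\lMat,\cMat,\oMat)$. Observability of the delayed system is never used, and minimality of the delay-free realization matters only for uniqueness and the converse.
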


\mypara{Proof sketch of Theorem ~\ref{thm:equivalence}}
The claim follows from the standard
augmented-state construction. Define an augmented state that contains
the original latent state and a buffer of the past $\tau$ inputs. The
augmented transition matrix updates the original state using the oldest
buffered input, shifts the buffer by one step, and inserts the current
input into the buffer. This gives a delay-free state-space realization.
By construction, its first $\tau$ Markov parameters are zero and the
remaining Markov parameters coincide with those of the delayed system.
Therefore, for any input sequence, the two systems have the same
input--output behavior under corresponding initial states.
\qed

Theorem~\ref{thm:equivalence} simplifies the problem of identifying a delayed system to
estimating its possibly sparse sequence of Markov parameters from data.
This supports working in the standard state-space framework without explicitly
modeling delays, as long as the Markov parameters are correctly captured.
In particular, this supports the modeling choice in \method\ to perform inference forecasting entirely in terms of delay-free state-space models realized from
estimated Markov parameters.

\subsection{Moment Tensor Representation and Tensor Factorization}
\label{subsec:moment-tensor}

Next, we summarize the moment-based tensor representation used in the
online estimation algorithm of \method.
In the previous section, we introduced the system tensor $\systemtensor$
as a compact way to encode higher-order moments of the input-output data and
as the object for CP decomposition in \Mcollect\ and \Madapt.
Here we state a concise result from {\cite{LDSmeetsCTR}}, which shows that factorizing a properly constructed moment tensor is enough to recover
the Markov parameters of the underlying systems.

\begin{theorem}[Recovery of Markov parameters via moment tensor factorization~{\cite{LDSmeetsCTR}}]
\label{thm:moment-cp-compact}
Let $(\ctrvec(t),\datavec(t))_{t\ge 1}$ be generated by a (mixture of) linear
time-delay MIMO systems with regime-specific Markov parameters
$\{h^{(i)}_j\}_{j\ge 1}$.
Assume that the exogenous inputs are independent and identically distributed,
with a distribution that satisfies suitable moment conditions, and that the resulting moment tensor is well-defined.
Then there exists a polynomial-time procedure that constructs a tensor
$\systemtensor$ from empirical moments of $(\ctrvec(t),\datavec(t))$ such that,
under mild identifiability conditions, any sufficiently accurate CP decomposition
of $\systemtensor$ recovers the Markov parameters $\{h^{(i)}_j\}_{j\ge 1}$ of
the constituent systems, up to permutation and scaling of the components.
\end{theorem}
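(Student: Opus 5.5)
The plan is to reduce the statement to the delay-free mixture-of-LDS setting of \cite{LDSmeetsCTR}, using the augmented-state construction behind Theorem~\ref{thm:equivalence}.

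\textbf{Step 1: remove the delays.} For each regime $i$, embed the delayed system into a delay-free realization $(\lMat_i,\cMat_i,\oMat_i)$. Its state stacks $\statevec_i(t)$ together with a buffer of the last $\lag_i$ inputs. Its Markov parameters $\oMat_i\lMat_i^{j-1}\cMat_i$ equal $h^{(i)}_j$: they are zero for $j\le\lag_i$ and $\oMat'_i(\lMat'_i)^{j-\lag_i-1}\cMat'_i$ afterwards. Consequently the data-generating process $(\ctrvec(t),\datavec(t))$ is exactly a mixture of ordinary linear dynamical systems driven by i.i.d.\ inputs, with the same Markov parameter sequences. This lets us invoke the recovery guarantee of \cite{LDSmeetsCTR} verbatim, provided its hypotheses still hold after augmentation.

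\textbf{Step 2: compute the population moment block and read off the CP structure.} Expand each output as
\[
\datavec(t)=\oMat_i\lMat_i^{t}\statevec_i(0)+\sum_{j\ge1}h^{(i)}_j\ctrvec(t-j).
\]
Then use independence and the moment conditions on the inputs, say $\mathbb{E}[\ctrvec]=0$ and $\mathbb{E}[\ctrvec\ctrvec^\top]=I$ after whitening.

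Take a triplet $(k_1,k_2,k_3)$ with the index pairs $(t_j,\tilde t_j)$ from \Mcollect. The expectation of $\mathrm{vec}(\datavec(t_1)\ctrvec(\tilde t_1)^\top)\otimes\cdots$ then factorizes. The key point is that the chosen index gaps make the three input indices disjoint from the intervening windows, so each factor isolates a single Markov parameter, giving
\[
\mathbb{E}\,\mathcal{M}'(k_1,k_2,k_3)=\sum_{i}w_i\,\mathrm{vec}(h^{(i)}_{k_1})\otimes\mathrm{vec}(h^{(i)}_{k_2})\otimes\mathrm{vec}(h^{(i)}_{k_3}),
\]
where $w_i$ is the regime weight. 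Cross terms and initial-state contributions vanish in expectation, or decay geometrically under a stability assumption that I would add as part of the ``suitable conditions''.

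Stacking over $k_1,k_2,k_3\le k_{\max}$ gives $\systemtensor=\sum_i w_i\,\mathbf{m}_i\otimes\mathbf{m}_i\otimes\mathbf{m}_i$, where $\mathbf{m}_i$ is the stacked vector of $\mathrm{vec}(h^{(i)}_1),\dots,\mathrm{vec}(h^{(i)}_{k_{\max}})$.

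\textbf{Step 3: identifiability and perturbation.} Take the mild identifiability condition to be linear independence of $\{\mathbf{m}_i\}$, for example $\mathbf{m}_i$ pairwise distinct with full column rank. Jennrich/Kruskal uniqueness then gives a unique CP decomposition up to permutation and scaling, computable in polynomial time by simultaneous diagonalization. For empirical moments:
\begin{itemize}
\item Use concentration of the sample averages, which needs finite higher moments (sixth order) of the inputs.
\item Combine this with perturbation bounds for Jennrich's algorithm, so that sufficiently accurate empirical tensors yield factors close to $\mathbf{m}_i$.
\item Reshape each factor back into $\{h^{(i)}_j\}$.
\end{itemize}
This recovers the full sequence, since by Theorem~\ref{thm:equivalence} and Ho--Kalman the first $2\bar k$ parameters determine a minimal system.

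\textbf{Main obstacle.} I expect the main obstacle to be Step 2 under regime switching. When the regime changes within the span $\tilde t_1,\dots,t_3$, the product does not factor cleanly. The first thing I would try is to assume regimes are drawn i.i.d.\ per window, or persist much longer than $3k_{\max}+2$, and treat straddling windows as a bounded bias term absorbed into the ``sufficiently accurate'' hypothesis.

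Because Step 1 makes the setting match that of \cite{LDSmeetsCTR}, the remaining work is exactly their argument, which I would cite.
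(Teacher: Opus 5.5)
Your proposal is correct at the level of precision of the statement, and it rests on the same foundation as the paper, whose proof is only a pointer to \cite{LDSmeetsCTR}. What you add is the bridge the paper leaves implicit. Step~1 (absorbing $\lag_i$ into a shift register, so that a delay only produces leading zero Markov parameters) is what actually brings time-delay regimes under a result stated for ordinary linear dynamical systems. Your ``main obstacle'' is a real mismatch that the bare citation glosses over: the cited guarantee concerns a mixture over trajectories, each generated by a single system, not one stream that switches regimes, so a persistence assumption like yours is needed.

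A few refinements.
\begin{itemize}
\item \emph{Step~2 needs no stability.} Because $\tilde t_1<t_1<\tilde t_2<t_2<\tilde t_3<t_3$, and the inputs are zero-mean and independent of each other and of the noise, $\ctrvec(\tilde t_3)$ can only pair with its own copy inside $\datavec(t_3)$. Then $\ctrvec(\tilde t_2)$ can only pair with its copy inside $\datavec(t_2)$, and $\ctrvec(\tilde t_1)$ with its copy inside $\datavec(t_1)$. So all cross, initial-state and noise terms vanish exactly.
\item \emph{Where stability is needed.} It is needed for concentration instead, since overlapping sub-windows of one stream are dependent.
\item \emph{Minimality.} The augmented realization is non-minimal whenever $\ker\cMat'\neq\{0\}$, so pass to a minimal realization. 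Its order grows with $\lag_i$. This forces $k_{\max}>\lag_i$ (otherwise $\mathbf{m}_i=0$) and $k_{\max}$ of about twice that order for Ho--Kalman.
\item \emph{The ``any decomposition'' clause.} The paper uses ALS, so apply your Jennrich perturbation bound to the rank-$\trank$ approximant itself. Its flattenings have rank exactly $\trank$, so its decomposition is unique and therefore close to the true components.
\end{itemize}
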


\begin{proof}
See \cite{LDSmeetsCTR} for the complete proof.
\end{proof}

This theorem gives the theoretical justification for the tensor factorization
step in \method. The CP factors obtained by applying ALS to the empirical
system tensor in \Madapt\ can be seen as estimates of the
regime-specific Markov parameters.
Combined with Theorem~\ref{thm:equivalence}, this ensures that the subsequent
realization step produces delay-free models whose input-output behavior
matches that of the underlying time-delay systems.
\begin{figure*}[t]
    \centering
    \includegraphics[width=\linewidth]{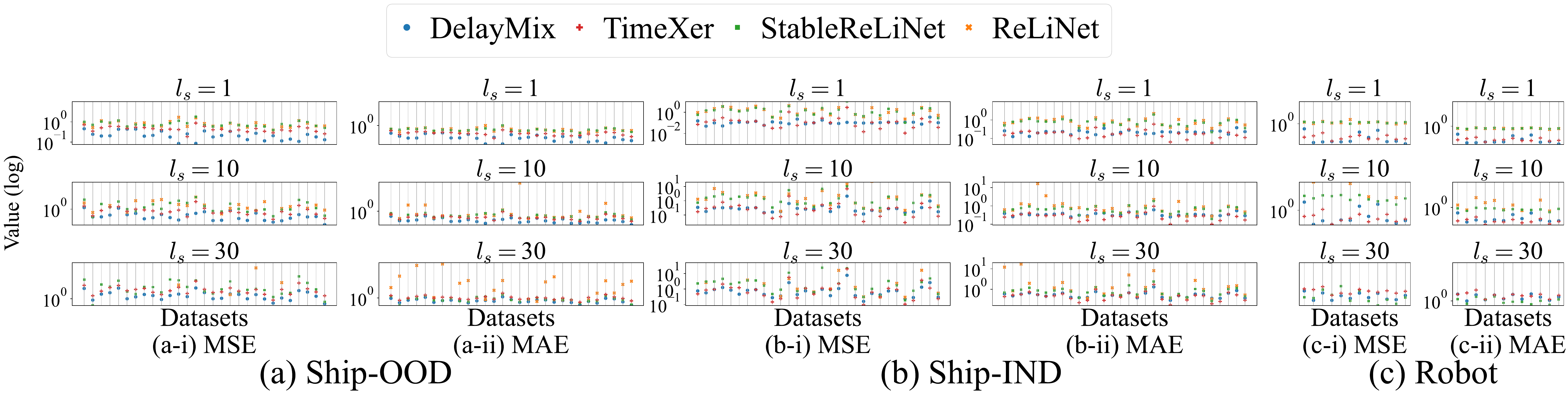}
    \caption{
        Forecasting performance comparison.
        On the delayed mixture benchmark, \method\ outperforms competing methods in terms of
        mean squared error (MSE) and mean absolute error (MAE).
    }
    \label{fig:result}
\end{figure*}
\subsection{Computational Complexity of \method}
\label{subsec:complexity}

Finally, we analyze the computational complexity of the proposed online algorithm
\method.
The dominant computational cost arises from the ALS-based tensor decomposition
used to update the system tensor representation, along with the following
realization of state-space models.

\begin{lemma}[Time complexity of \method]
\label{thm:time}
Given a new incoming tensor (i.e., an updated system tensor $\systemtensor$), the time complexity of one update of \method is
\begin{align}
\nonumber
        \mathcal{O}\bigl(
        i\,(8s^3\datadim^3\cdim^3\trank
         + 6s\datadim\cdim\,\trank^2)
        + \sdim^3\len_c
        + \sdim^2\cdim\lstep
        + \sdim^2\lstep
        + \sdim\lstep\datadim
    \bigr),
\end{align}
where $i$ is the number of ALS iterations for tensor decomposition, $s$ is the
maximum lag used to construct $\systemtensor$, $\trank$ is the CP rank, $\len_c$
is the length of the current data window, and $\lstep$ is the forecasting horizon.
\end{lemma}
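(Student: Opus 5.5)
The plan is to split one update of \method into its three stages (the ALS decomposition in \Madapt, the Ho--Kalman realization, and the filtering/forecasting in \Fpred), bound each separately, and add the bounds. \Mcollect is not counted, because the statement takes the updated tensor $\systemtensor$ as given.

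\textbf{ALS stage.} Each mode of $\systemtensor$ has size $p=s\datadim\cdim$, since it stacks $\mathrm{vec}(\datavec(t_j)\ctrvec(t_j)^\top)$ over up to $s$ lags. Two costs arise in one ALS sweep.
\begin{itemize}
\item The matricized-tensor-times-Khatri--Rao product (MTTKRP) for a single mode costs $\ord(p^3\trank)$. Taking the $s$ lag blocks together with the factor-two symmetric pairing of modes gives mode sizes of order $2s\datadim\cdim$, which matches the stated $8s^3\datadim^3\cdim^3\trank=(2s\datadim\cdim)^3\trank$.
\item Forming the Gram (Hadamard) products of the other factor matrices costs $\ord(p\trank^2)$ per mode. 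Over the three modes, again with the factor $2$, this is the $6s\datadim\cdim\trank^2$ term.
\end{itemize}
Solving the resulting $\trank\times\trank$ system costs $\ord(\trank^3)$. We assume $\trank\le p$, so this is absorbed. Multiplying by the $i$ iterations gives the first summand.

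\textbf{Realization and forecasting stages.} Ho--Kalman works on a Hankel matrix whose size depends only on $s,\datadim,\cdim$ and is truncated to rank $\sdim$. We will argue that its cost is dominated by (or of the same order as) the ALS term, and assume it is absorbed, since the statement has no separate realization term.

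Running the Kalman filter and RTS smoother over the window $\len_c$ costs $\ord(\sdim^3)$ per step, coming from covariance propagation and inversion of the $\sdim\times\sdim$ matrices. This gives $\sdim^3\len_c$. Here we assume the observation-dependent cost in $\datadim$ is subsumed, e.g. $\datadim\le\sdim$ or an information-form update.

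Forward simulation over $\lstep$ steps costs, per step:
\begin{itemize}
\item $\sdim^2$ for $\lMat\statevec$,
\item $\sdim\cdim$ for $\cMat\ctrvec$,
\item $\sdim\datadim$ for $\oMat\statevec$.
\end{itemize}
The paper lists $\sdim^2\cdim\lstep$, which suggests that the input term is accounted for as a matrix product $\lMat\cMat$ formed per step. I would reproduce the terms as stated, since a looser upper bound is still valid. Summing all contributions yields the claimed expression.

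\textbf{Main obstacle.} The hard part is not the asymptotics but matching the exact constants $8$ and $6$ to the tensor's precise mode dimensions, which follow from the grouping in \Mcollect. I would first fix the mode size as $2s\datadim\cdim$ (lags counted on both sides of the pair) and check that the constants then follow directly. If they do not, I would state the bound with the constants absorbed into $\ord(\cdot)$, which the lemma's big-O permits anyway.
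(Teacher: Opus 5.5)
Your proposal is correct and follows the same route as the paper's proof sketch. One ALS sweep costs $\mathcal{O}(p^3R+pR^2)$ with mode size $p=2sdd_c$, multiplied by $i$; the remaining terms are the Kalman filter/smoother cost $k^3 l_c$ over the window and the forward-simulation cost over the horizon. (The paper simply lists these, without discussing the Ho--Kalman step or the $d$-dependent filtering costs that you take care to absorb.)

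One correction concerns your justification of the constant. The factor $2$ in the mode size does not come from a ``symmetric pairing'' of modes or from lags counted on both sides. It comes from the tensor stacking $k_{\max}$ lag blocks of size $dd_c$ each, with implicitly $k_{\max}=2s$, which supplies exactly the $2s$ Markov parameters $\mathbf{g}_1,\dots,\mathbf{g}_{2s}$ that Ho--Kalman consumes. Since the bound is asymptotic, this does not affect correctness.
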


\mypara{Proof sketch of Lemma \ref{thm:time}}
Let $p=2sdd_c$ be the mode size of the
constructed third-order system tensor. One ALS iteration for a rank-$R$
CP decomposition costs $O(p^3R+pR^2)$, and repeating it for $i$
iterations gives the first term. The remaining terms are the costs of
state inference on the current window and simulation over the forecasting
horizon.
\qed

The computational cost of \method mainly depends on the ALS-based tensor
decomposition, especially on the number of iterations $i$ needed for convergence.
To reduce this cost in practice, the full decomposition step is only triggered when
there is a significant mismatch between the current model and the newly updated system tensor
is detected. This helps avoid unnecessary updates.
Also, because the data changes gradually over time, consecutive system
tensors are similar. As a result, initializing ALS with the previous factor estimates usually
leads to fast convergence (small $i$).
These properties allow \method\ to work efficiently in streaming settings while
maintaining accurate modeling and forecasting performance.

%% file: 040experiments.tex
In this section, we describe the performance of \method using real datasets. The experiments were designed to answer the following questions about \method:
\begin{description}
    \item[\textit{Q1.Accuracy:}] How accurately does it predict future events?
    \item[\textit{Q2.Scalability:}] How does it scale in terms of computational time and memory?
\end{description}
\subsection{Experimental Setup}
\begin{figure}[t]
    \centering
    \includegraphics[width=\linewidth]{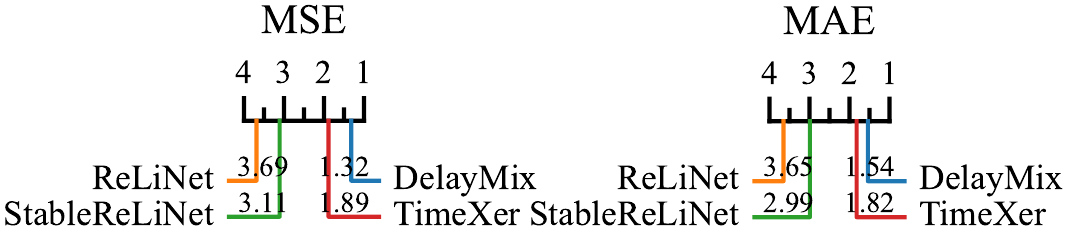}
    \caption{
        Critical difference diagram of datasets in terms of MSE and MAE.
    }
    \label{fig:cd}
\end{figure}
\mypara{Datasets} 
To evaluate \method, we use three real-world datasets. 

For the evaluation of predictive accuracy (Q1) and scalability (Q2), we used the following three publicly available real-world datasets:
\begin{itemize}
\item 
\textbf{Ship-OOD/Ship-IND} \cite{Shipdataset} is a 4-DOF ship maneuvering dataset under environmental disturbances. The 4 exogenous variables include propeller speed, rudder angles, and wind; outputs are also 4 (linear and angular velocities). We use both in-distribution (Ship-IND) and out-of-distribution (Ship-OOD) test sets.
\item \textbf{Robot} \cite{Robotdataset} is a dataset of a 6-DOF industrial robot arm, with 6 exogenous inputs (motor torques) and 6 outputs (joint angles).
\end{itemize}

\mypara{Baseline Methods}
We compare \method with state-of-the-art models that explicitly incorporate exogenous variables, including StableReLiNet, ReLiNet \cite{ReLiNet}, and TimeXer \cite{TimeXer}. Many existing forecasting models either omit exogenous inputs or treat them as endogenous, limiting their ability to model structured input-output relationships. Accordingly, we exclude such models from our main comparison.

\mypara{Hyperparameter Setting of \method}
We selected \(\rho \in \{0.5,0.6,0.7,0.8,0.9,1.0\}\) and
\(R \in \{2,4,8,10\}\) using the validation set, and fixed \(s=3\).
\subsection{Q1: Accuracy}
We evaluated \method using mean squared error (MSE) and mean absolute error (MAE), where lower values indicate better accuracy. Results are averaged over five trials for robustness. As shown in Fig.~\ref{fig:result}, \method consistently yields lower errors than baselines. The critical difference diagram in Fig.~\ref{fig:cd}, based on the Wilcoxon-Holm test~\cite{WilcoxonHolmmethod}, confirms these improvements are statistically significant. Table~\ref{table:win_count} reports how often each method ranks best, further highlighting the advantage of \method.

A key strength of \method is its ability to adaptively estimate a mixture of time-delay systems in streaming settings, enabling it to capture dynamic and heterogeneous temporal patterns without prior knowledge of delay structures or access to past data. While TimeXer performs well on the stationary Ship-IND dataset, it falls short on the non-stationary Ship-OOD data, where adaptive modeling is crucial. In contrast, models like StableReLiNet and ReLiNet tend to degrade under streaming updates, likely due to unstable parameter adaptation. For example, ReLiNet shows noticeably lower accuracy in several scenarios (see Fig.~\ref{fig:result}).
These results show that \method enables accurate real-time prediction in dynamic streaming settings.

\input{TABLE/table_win_count}

\subsection{Q2: Scalability}
Next, we evaluated the performance of \method in terms of computation time. The left column of Fig. \ref{fig: scale_t} shows the wall clock time of the experiments performed on each dataset. Thanks to an effective parameter estimation algorithm, the computation time of \method is independent of the length of the data stream. Several small spikes are due to the execution of \Madapt. CP decomposition for updating the regime database increases the computation time for that time step. The right column of Fig. \ref{fig: scale_t} shows the average computation time for the entire tensor stream. Notably, TimeXer and ReLiNet were run in a relatively fast environment, as all computations, including forecasting future values and parameter updating using lightweight single-sample gradients, are performed on the GPU. However, \method is still overwhelmingly faster than them. This demonstrates that \method achieves a high trade-off between accuracy and efficiency.

\begin{figure}[t]
    \centering
    \includegraphics[width=\linewidth]{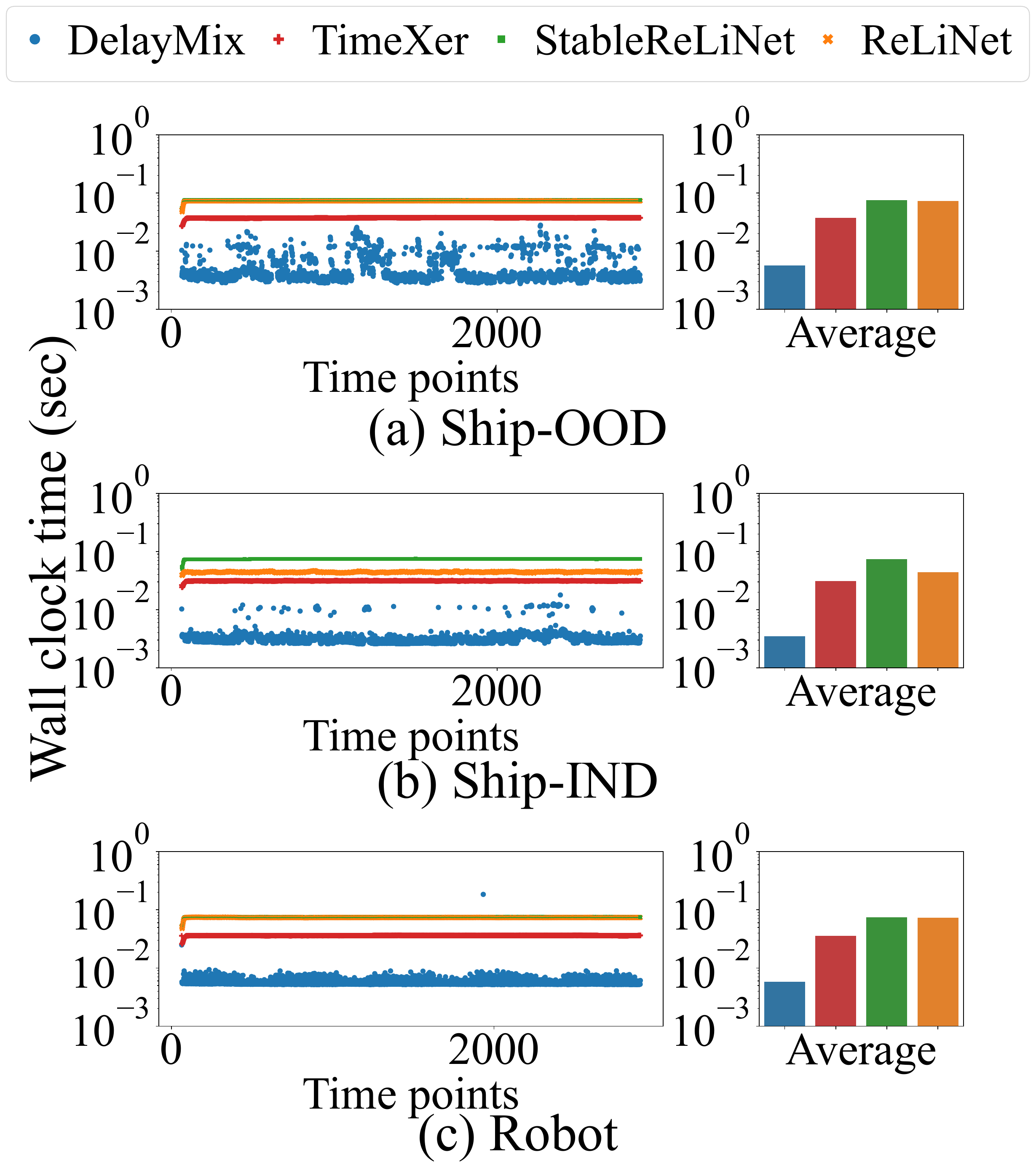}
    \caption{
        Efficiency of streaming \method: Our method is consistently faster than other baseline methods.
    }
    \label{fig: scale_t}
\end{figure}

%% file: TABLE/table_win_count.tex
\begin{table}[t]
\centering
\resizebox{1.0\linewidth}{!}{
\begin{tabular}{ll|cccccccc}
\toprule
\multicolumn{2}{c|}{\textbf{Method}} & \multicolumn{2}{c}{\textbf{\method}} & \multicolumn{2}{c}{\textbf{TimeXer}} & \multicolumn{2}{c}{\textbf{ReLiNet}} & \multicolumn{2}{c}{\textbf{StableReLiNet}} \\
\cmidrule(lr){1-2} \cmidrule(lr){3-4} \cmidrule(lr){5-6} \cmidrule(lr){7-8} \cmidrule(lr){9-10}
\textbf{Dataset} & $\boldsymbol{l_s}$ & \textbf{MSE} & \textbf{MAE} & \textbf{MSE} & \textbf{MAE} & \textbf{MSE} & \textbf{MAE} & \textbf{MSE} & \textbf{MAE} \\

\midrule
\multirow{3}{*}{Ship-OOD} 
& 1  & \textbf{28} & \textbf{26} & \underline{1} & \underline{3} & 0 & 0 & 0 & 0 \\
& 10 & \textbf{26} & \textbf{28} & \underline{2} & \underline{1} & 0 & 0 & 1 & 0 \\
& 30 & \textbf{27} & \textbf{25} & 0 & 0 & \underline{1} & 1 & \underline{1} & \underline{3} \\
\midrule
\multirow{3}{*}{Ship-IND} 
& 1  & \textbf{18} & \underline{9}  & \underline{12} & \textbf{21} & 0 & 0 & 0 & 0 \\
& 10 & \textbf{16} & \underline{6}  & \textbf{14} & \textbf{24} & 0 & 0 & 0 & 0 \\
& 30 & \textbf{16} & \underline{8}  & \underline{13} & \textbf{22} & 1 & 0 & 0 & 0 \\
\midrule
\multirow{3}{*}{Robot} 
& 1  & \textbf{8} & \textbf{7} & 4 & 5 & 0 & 0 & 0 & 0 \\
& 10 & \textbf{7} & \underline{3}  & \underline{5} & \textbf{9} & 0 & 0 & 0 & 0 \\
& 30 & \underline{5} & \underline{1}  & 0 & 0 & 0 & 0 & \textbf{7} & \textbf{11} \\
\bottomrule
\end{tabular}
}
\caption{$1^{\text{st}}$ Count in dataset (higher is better).}
\label{table:win_count}
\end{table}

%% file: 080related.tex
This section reviews related work on mixtures of dynamical systems and time-series forecasting.

\mypara{Learning Mixtures of Dynamics}
Traditionally, dynamical system estimation has centered on a single system \cite{SINDy,ode1,si_icml_18,KDD2023_PDE,mio-sindy,LaNoLem,ICLR2025_si,ICMR2025_si}, but recent advances enable learning mixed systems—where multiple dynamical regimes coexist—from unlabeled trajectory data.
For example, \cite{MixtureofLDS} proposes a spectral algorithm that learns a mixture from short unlabeled trajectories with end-to-end guarantees, and \cite{LDSmeetsCTR} introduces a tensor-based framework that recovers LDS mixtures via Markov parameters estimated from high-order moments.
These approaches are theoretically appealing but inherently offline: constructing the required moment tensors assumes access to the full dataset, which is impractical in streaming scenarios such as industrial process control.
Moreover, they handle delays only implicitly and do not explicitly model unknown or time-varying delays, making it difficult to adapt to abrupt changes in system dynamics.
\method builds on the framework of \cite{LDSmeetsCTR} and extends it to streaming data by incrementally updating a system tensor and adaptively estimating multiple time-delay systems from incoming observations.

\mypara{Time-Series Forecasting Methods}
Classical forecasting methods such as ARIMA and Kalman filters~\cite{ARIMA,LDS,Dynammo} remain standard tools, and recent deep learning models~\cite{Informer,DeepAR,DLinear,PatchTST,Fredformer} have achieved strong empirical performance.
However, many of these methods either ignore exogenous variables or treat them in the same way as endogenous variables, although exogenous inputs often play a distinct and crucial role in prediction.
To better exploit exogenous information, ARIMAX~\cite{ARIMAX} extends ARIMA with exogenous regressors, while recent neural approaches such as ReLiNet~\cite{ReLiNet} and TimeXer~\cite{TimeXer} incorporate exogenous signals through switched linear systems and Transformer-based attention, respectively, achieving competitive results.
However, these models generally do not model time delays explicitly and tend to struggle in streaming settings due to high computational cost and frequent retraining, whereas \method is designed for online operation: it maintains a compact summary of input–output relationships, explicitly captures delay dynamics, and updates models efficiently in real time.

%% file: 090conclusion.tex
We introduce \method, an online framework for modeling dynamic mixtures of time-delay systems from streaming time series. Our approach addresses key challenges in streaming scenarios, including nonstationarity, unknown delays, and limited resources, while maintaining both accuracy and scalability over long horizons. \method incrementally builds and decomposes high-order moment tensors to extract regime-specific Markov parameter sequences, which efficiently capture system dynamics and input-output delays in a compact, delay-sensitive representation. This allows for delay-aware identification even when regimes switch, without assuming known change points or fixed delay structures. The framework does not require direct access to the system or prior knowledge of delay lengths. Instead, it uses moment-based summaries that are updated online, so it does not need to store the entire history, keeping memory use low and computation efficient. \method can also update its set of candidate systems when the current model no longer fits new observations, helping it adapt to changing dynamics while avoiding unbounded memory growth or switching costs. Our experiments with real-world datasets show that \method consistently achieves better prediction accuracy and computational efficiency than other methods by accurately tracking changing delays and adapting to regime shifts. 

%% file: 0A_appendix.tex
\section{Symbols and Definitions}
\input{TABLE/table_symbol}
Table \ref{table: define} lists the symbols and definitions used in this paper.
\section{Details of algorithms}
\label{apseq: ho-kalman}
\subsection{Ho-Kalman algorithm}
Here, we show the overrview of Ho-Kalman algorithm \cite{HoKalman}.
\input{ALG/alg_hokalman}
\subsection{Algorithm Overview}
Here, we show the overrview of \method.
\input{ALG/alg_overview}
\subsection{Generate: Forecasting}
\label{apseq:kalman}
Here, we describe the following the forward-pass (filter) and backward-pass (smoother) equations.
\paragraph{Forward Pass (Kalman Filter):}
\begin{align}
    \label{eq: fw_st}
    \fwmu(t) &= \lMat\filmu(t-1)+\cMat\ctrvec(t-1), \\
    \hat{\msCov}(t) &= \lMat\msCov(t-1)\lMat^\top+\sCov, \\
    \mathbf{K}(t) &= \hat{\msCov}(t)\oMat^\top(\oMat\hat{\msCov}(t)\oMat^\top+\oCov)^{-1},\\
    \filmu(t) &= \fwmu(t)+\mathbf{K}(t)(\datavec(t)-\oMat\fwmu(t)),\\
    \label{eq: fw_ed}
    \msCov(t) &= (\mathbf{I}-\mathbf{K}(t)\oMat)\hat{\msCov}(t).
\end{align}

\paragraph{Backward Pass (RTS Smoother):}
\begin{align}
    \label{eq: bw_st}
    \msB(t) &= \msCov(t)\lMat^\top\hat{\msCov}(t+1)^{-1}, \\
    \label{eq: bw_ed}
    \hat{\statevec}(t) &= \filmu(t)+\msB(t)(\hat{\statevec}(t+1) - \fwmu(t+1)).
\end{align}

\section{Proofs}
\label{apseq:proofs}
\subsection{Proof of Theorem \ref{thm:equivalence}}
\label{apseq:equivalence-proof}

\begin{lemma}[Delay-free realization from Markov parameters]
\label{thm:markov}
Let the Markov parameters of the delayed system~\eqref{eq:sta_delayed}–\eqref{eq:obs_delayed}
be given by $\{g_j\}_{j\ge 1}$.
Then there exists a delay-free state-space model
\begin{align}
    \tilde{\statevec}(t+1) & = \lMat \tilde{\statevec}(t) + \cMat \ctrvec(t), \\
    \estvec(t) & = \oMat \tilde{\statevec}(t),
\end{align}
with possibly higher-dimensional latent state $\tilde{\statevec}(t)$ and suitable matrices
$\lMat, \cMat, \oMat$ such that
\begin{align}
              \nonumber
    g_j = \oMat \lMat^{j-1} \cMat \quad \text{for all } j \ge 1.
\end{align}
In other words, the delay-free model reproduces exactly the same Markov parameters as
the original delayed system.
\end{lemma}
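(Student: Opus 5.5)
We prove Lemma~\ref{thm:markov} with the augmented-state construction from the proof sketch of Theorem~\ref{thm:equivalence}, made explicit. Write $n=\sdim$, $m=\cdim$, and let $\lag\ge 0$ be the delay of the system~\eqref{eq:sta_delayed}--\eqref{eq:obs_delayed}. If $\lag=0$ there is nothing to prove: we take $(\lMat,\cMat,\oMat)=(\lMat',\cMat',\oMat')$. For $\lag\ge 1$ we define the augmented state
\[
\tilde{\statevec}(t)=\bigl(\statevec(t)^\top,\ \ctrvec(t-\lag)^\top,\ \ctrvec(t-\lag+1)^\top,\ \dots,\ \ctrvec(t-1)^\top\bigr)^\top\in\mathbb{R}^{n+\lag m}.
\]
Its first block is the original state, and the remaining $\lag$ blocks form a buffer that holds the last $\lag$ inputs, oldest first.

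Next we write down the block matrices. The matrix $\lMat$ has first block row $(\lMat',\ \cMat',\ 0,\dots,0)$, so the state is updated using the oldest buffered input. Its buffer rows form the block shift that moves block $k+1$ into block $k$ and puts zero in the last block. The input matrix is $\cMat=(0,\dots,0,I_m)^\top$, which writes $\ctrvec(t)$ into the last buffer slot. The output matrix is $\oMat=(\oMat',0,\dots,0)$. With these choices, $\tilde{\statevec}(t+1)=\lMat\tilde{\statevec}(t)+\cMat\ctrvec(t)$ reproduces~\eqref{eq:sta_delayed} in the first block and the buffer shift in the rest, and $\oMat\tilde{\statevec}(t)=\oMat'\statevec(t)$. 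So, with corresponding initial data, the outputs agree identically.

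The main computational step is the Markov parameters. Let $e_k$ denote the $k$-th buffer block. Then $\cMat$ injects the input into $e_\lag$. One application of $\lMat$ moves $e_k$ to $e_{k-1}$ for $k\ge 2$, and maps $e_1$ to the state block through $\cMat'$. Hence for $1\le j\le\lag$ the vector $\lMat^{j-1}\cMat$ is supported on the buffer block $e_{\lag-j+1}$, so $\oMat\lMat^{j-1}\cMat=0$. Once the input reaches the state block, for $j\ge\lag+1$ we get $\lMat^{j-1}\cMat=\bigl((\lMat')^{j-\lag-1}\cMat',0,\dots,0\bigr)^\top$. We prove this by induction on $j$, using that the first block row of $\lMat$ acts on a vector supported on the state block as $\lMat'$. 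Therefore $\oMat\lMat^{j-1}\cMat=\oMat'(\lMat')^{j-\lag-1}\cMat'=g_j$. This matches the piecewise formula for $h_j$, i.e.\ $g_j$, for every $j\ge1$.

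The only delicate point is the index bookkeeping in this induction: the off-by-one between $j-1$ applications of $\lMat$ and the $\lag$ buffer slots. We settle it by tracking the single nonzero block explicitly, as above.

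The lemma does not claim minimality, so we need not check reachability or observability of the augmented realization; it may have dimension larger than $n$. If a minimal realization is wanted, as in Theorem~\ref{thm:equivalence}, we obtain it afterwards by Kalman decomposition, or equivalently by applying Ho--Kalman to $\{g_j\}$. Since the Hankel matrix has finite rank at most $n+\lag m$, this gives a minimal triple with the same Markov parameters.
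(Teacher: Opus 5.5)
Your proof is correct. It is the augmented-state construction from the main-text proof sketch of Theorem~\ref{thm:equivalence}, written out in full. It differs from the appendix proof of this lemma in two ways.

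First, the state. The paper stacks only the last $\tau$ inputs, $\tilde z(t)=[u(t-\tau+1)^\top,\dots,u(t)^\top]^\top$, without the original state $z(t)$. That buffer cannot express $x(t)=C'z(t)$, which depends only on inputs at times $\le t-\tau-1$. You carry $z(t)$ together with the buffer $u(t-\tau),\dots,u(t-1)$, and this is exactly what makes the state update and the output map consistent.

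Second, the existence step. The paper's existence claim rests on invoking Ho--Kalman or subspace realization on $\{g_j\}$. You write $(A,B,C)$ down explicitly and verify $CA^{j-1}B=g_j$ directly, by tracking the single nonzero block through the shift, including the index change at $j=\tau+1$.

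Your route gives a self-contained argument and an explicit, generally non-minimal, realization of dimension $k+\tau d_c$. That construction also certifies that the Hankel matrix of $\{g_j\}$ has finite rank, which the paper's appeal to Ho--Kalman tacitly needs.

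The paper's route produces a minimal realization directly, which is the form Theorem~\ref{thm:equivalence} uses. Your closing remark recovers it too, by Kalman decomposition or Ho--Kalman applied afterwards.
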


\begin{proof}
Because the original system has an explicit input delay of $\lag$ steps, the input $\ctrvec(t)$
only affects the output from time $t+\lag$ onward.
Its input–output behavior is therefore fully characterized by the Markov parameter sequence
$\{g_j\}_{j\ge 1}$, which defines the impulse response
\begin{align}
              \nonumber
    \estvec(t) = \sum_{j=\lag}^{\infty} g_j \,\ctrvec(t - j).
\end{align}

We construct an equivalent delay-free state-space model by embedding the delayed inputs into
an extended state vector.
Specifically, define the extended state to stack the last $\lag$ inputs:
\begin{align}
              \nonumber
    \tilde{\statevec}(t)
    :=
    \begin{bmatrix}
        \ctrvec(t - \lag + 1)^\top &
        \ctrvec(t - \lag + 2)^\top &
        \cdots &
        \ctrvec(t)^\top
    \end{bmatrix}^\top.
\end{align}
With this construction, the delay is absorbed into the state update, and the output $\estvec(t)$
can be written as a linear function of $\tilde{\statevec}(t)$ without explicit delay.

As a result, the convolution representation induced by $\{g_j\}$ can be realized by a standard
delay-free state-space model.
Classical realization techniques such as the Ho–Kalman algorithm or subspace methods applied
to the Markov parameter sequence $\{g_j\}$ yield matrices $(\lMat, \cMat, \oMat)$ that satisfy
\begin{align}
              \nonumber
    g_j = \oMat \lMat^{j-1} \cMat \quad \text{for all } j \ge 1.
\end{align}
Hence the delayed system and the constructed delay-free system share identical Markov parameters,
which proves the claim.
\end{proof}

\begin{proof}[Proof of Theorem \ref{thm:equivalence}]
By Lemma~\ref{thm:markov}, the delayed system~\eqref{eq:sta_delayed}–\eqref{eq:obs_delayed}
admits a delay-free state-space realization whose Markov parameters $\{g_j\}_{j\ge 1}$ coincide
with those of the delayed system.
The input–output map of a linear time-invariant system is completely determined by its Markov
parameters (equivalently, by its transfer function).
Therefore any two minimal state-space realizations with the same Markov parameters are related
by a similarity transformation and produce identical outputs for any input sequence.
Since both the delayed system and $(\lMat,\cMat,\oMat)$ are minimal and share the same
Markov parameters, they are input–output equivalent.
\end{proof}

\subsection{Proof of Lemma \ref{thm:time}}
The most time-consuming part in \Mcollect and \Madapt is the computation of tensor decomposition, and its complexity is $\ord(i(8s^3\datadim^3\cdim^3\trank+6s\datadim\cdim \trank^2))$. The calculation of \Fpred is $\ord(\sdim^3\len_c+\sdim^2\cdim\lstep+\sdim^2\lstep+\sdim\lstep\datadim)$. Thus the total computation is $\ord(i(8s^3\datadim^3\cdim^3\trank+6s\datadim\cdim \trank^2) + \sdim^3\len_c+\sdim^2\cdim\lstep+\sdim^2\lstep+\sdim\lstep\datadim)$.

\section{Limitations and Future Work}
\input{070limitations}

\section{Additional Results}
\label{apseq:addtionalresults}
\begin{figure}[t]
    \centering
    \includegraphics[width=0.95\linewidth]{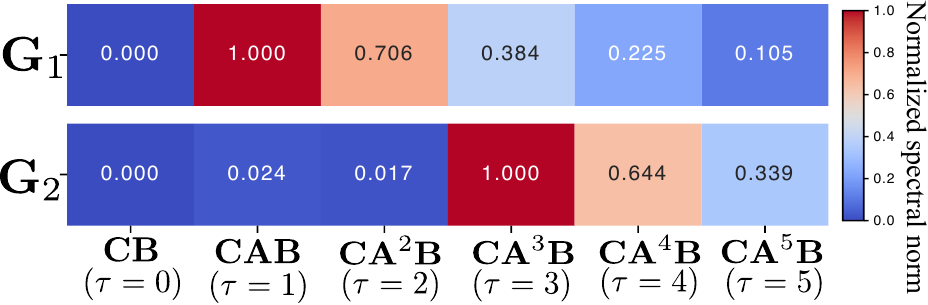}
    \caption{
          Effectiveness of the results: Normalized spectral norms of Markov parameters estimated by \method for synthetic systems with input delays of 1 and 3 ($\lag=1,3$). Each row corresponds to a Markov parameter (i.e., a system response at a time step), and each column within a block matrix represents a component of the parameter. The magnitude of each entry reflects the influence of the input at that position. Delays are revealed as leading low-magnitude entries, allowing \method to estimate the underlying system delays without prior knowledge.
    }
    \label{fig:intro}
\end{figure}
\input{TABLE/table_numerical}

This section provides additional empirical results that complement the experiments in Section~\ref{sec: 04_exp}. We first report aggregated forecasting performance across all benchmarks, then illustrate how \method recovers input–delay structures and tracks regime changes on synthetic data, and finally examine the sensitivity of \method to key hyperparameters.
\subsection{Numerical results}
\label{apseq: numerical}
Table~\ref{table:error_results_highlighted} reports the mean and standard deviation of forecasting errors across all datasets for \method and its baselines, with the best and second-best results shown in \textbf{bold} and \underline{underline}, respectively. As discussed in Section~\ref{sec: 04_exp}, our method outperforms state-of-the-art approaches.
\subsection{Effectiveness of \method}
Fig.\ref{fig:intro} shows the normalized spectral norms of the Markov parameters estimated by \method on synthetic datasets generated from two systems with input delays $\lag=1$ and $\lag=3$. Each row corresponds to a time step in the system’s response, and each column in the block matrix represents a Markov parameter component. The spectral norm of each column reflects the strength of the input’s influence on that component.

A system delay appears as a sequence of zeros or low-magnitude values at the beginning (left side) of the matrix. As shown, \method accurately identifies delays of 1 and 3 by detecting these characteristic patterns without any prior knowledge. This illustrates its ability to recover input delay structures and adaptively model a mixture of time-delay systems through estimated Markov parameters. While the delay patterns are clearly visible in this experiment, their clarity may be reduced under high noise, overlapping delays, or frequent switching. Nevertheless, we observe that they remain identifiable in practice.
\subsection{Sensitivity to Hyperparameters}
We evaluated the sensitivity of \method to two key hyperparameters ($R$ and $\rho$)on the Ship-OOD dataset with forecasting horizon fixed to $l_s = 30$.
Figure~\ref{apfig:hp_sensitivity} reports the mean MAE and MSE when varying $R \in \{2,4,8,10\}$ and $\rho \in \{0.5,0.6,0.7,0.8,0.9,1.0\}$.
Both metrics change only mildly across the explored ranges, and no setting leads to a substantial degradation in performance, indicating that \method is robust to reasonable choices of $R$ and $\rho$.
We observe slightly better errors for intermediate values, but the overall trends remain flat, suggesting that \method does not require fine-tuning of these hyperparameters to achieve strong forecasting performance on Ship-OOD.
\begin{figure}
    \centering
    \includegraphics[width=\linewidth]{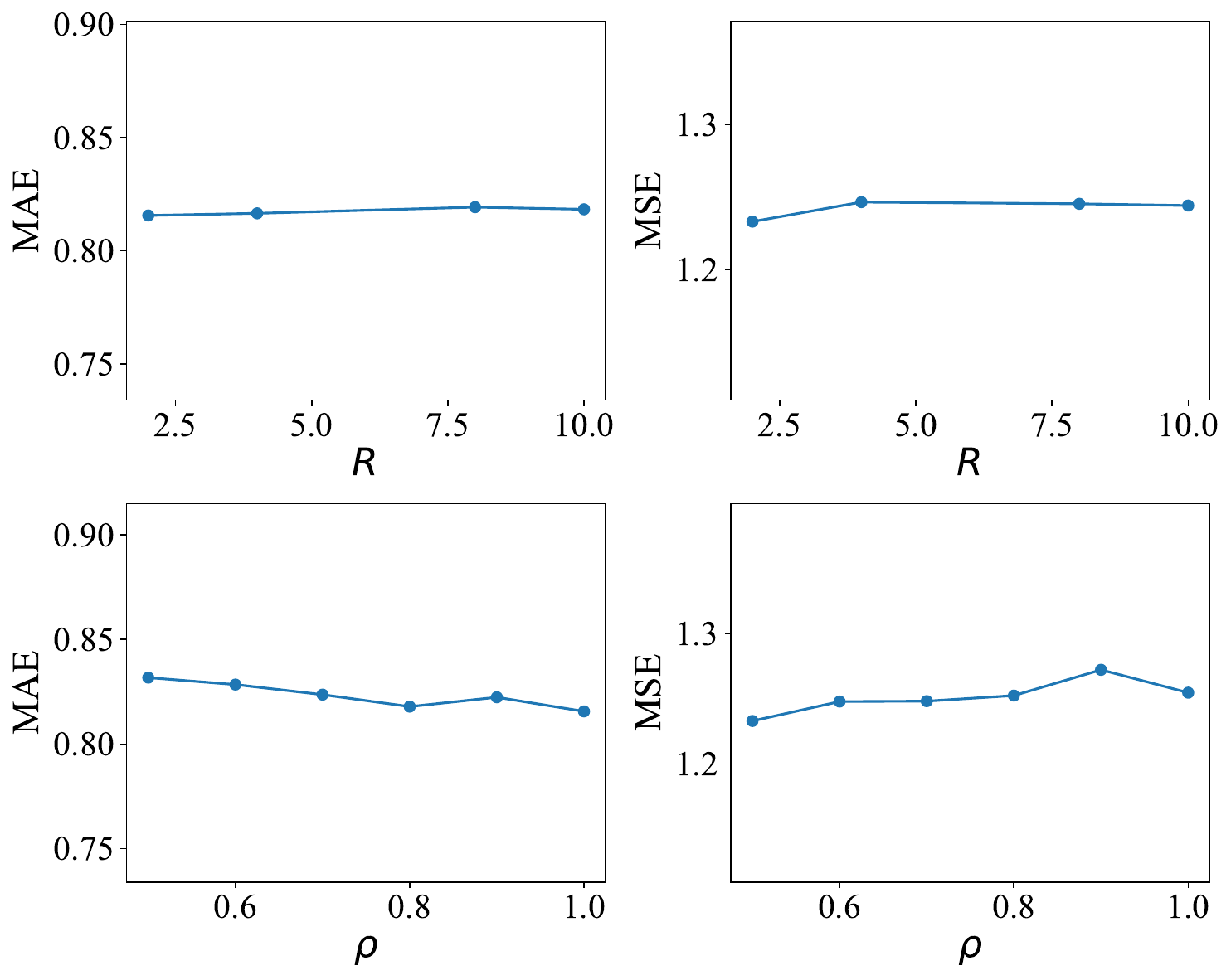}
    \caption{Hyperparameter studies.}
    \label{apfig:hp_sensitivity}
\end{figure}

\section{Experimental Setting}
\label{apseq: setting}
\mypara{Computing Infrastructure}
The configuration includes 2 * Xeon Gold 6258R 2.7Hz CPU, 12 * 64GB DDR4 RAM (768GB), and NVIDIA RTX A6000 48GB GPU, which is sufficient for all the baselines.

\mypara{Metrics}
All models are typically assessed using cumulative mean-squared errors (MSE) and mean-absolute errors (MAE), which means the model's performance is judged based on its total errors accumulated throughout the entire learning sequence.

\mypara{Hyperparameter Setting of \method}
We searched for the best results in $\rho \in \{0.5, 0.6, 0.7, 0.8, 0.9, 1.0\}$ and $\trank \in \{2,4,8,10\}$ to minimize the errors in the validation dataset and fixed $s$ to $3$. 

\mypara{Baselines}
The baseline learning rate was selected from the range $[1e-3, 3e-3, 1e-2, 3e-2]$ to minimize the errors in the validation data. The other parameters are the same as those that we used the default parameters provided in the authors' codes.

%% file: TABLE/table_symbol.tex
\begin{table}[H]
\centering
\caption{Symbols and definitions.}
\label{table: define}
\resizebox{1.0\linewidth}{!}{
\begin{tabular}{l|l}
\toprule
Symbol & Definition \\
\midrule
${\datadim}$
    & number of dimensions of data\\
${\cdim}$
    & number of dimensions of exogenous variables\\
${\sdim}$
    & number of state dimensions\\
${\trank}$
    & number of time-delay systems\\
${\newt}$
    & current time point\\
${l_c}$
    & current window size, i.e., $\newt-t=l_c$\\
${\lstep}$
    & forecasting step size\\
\midrule
${\datac}$
    & current data, i.e., $\datac = \data[t:\newt]$\\
${\cctr}$
    & current exogenous variab, i.e., $\cctr = \ctr[t:\newt+\lstep]$\\
\midrule
$\statevec(t)$
    & latent states at time point $t$\\  
$\estvec(t)$
    & estimated values at time point $t$\\
$\mathbf{G}$
    & Markov parameters \\
\midrule
$\systemtensor$
    & a third-order moment statistics tensor, $\systemtensor\in \mathbb{R}^{2s\datadim\cdim \times 2s\datadim\cdim \times 2s\datadim\cdim}$ \\
${\lMat}$
    & linear transition matrix, ${\sdim}\times{\sdim}$\\
${\cMat}$
    & projection matrix of exogenous inputs, ${\sdim}\times{\cdim}$\\
${\oMat}$
    & observation matrix, ${\datadim}\times{\sdim}$\\

\bottomrule
\end{tabular}
}
\end{table}

%% file: ALG/alg_hokalman.tex
\begin{algorithm}[H]
    \caption{Ho-kalman ($\mathbf{G}$)}
    \label{alg:Ho-Kalman}
    \begin{algorithmic}[1]
    \STATE {\bf Input:} (a) A markov parameter $\mathbf{G}=\{\mathbf{g}_1,...,\mathbf{g}_{2s}\}$ \\
    \STATE {\bf Output:} System parameters $\{\lMat, \cMat, \oMat\}$\\
    \STATE Form the Hankel matrix $\hat{H} \in \mathbb{R}^{ms \times p(s+1)}$ from $\mathbf{G}$ as
    \[
    \hat{H} = 
    \begin{bmatrix}
    \mathbf{g}_1 & \mathbf{g}_2 & \cdots & \mathbf{g}_{s+1} \\
    \mathbf{g}_2 & \mathbf{g}_3 & \cdots & \mathbf{g}_{s+2} \\
    \vdots & \vdots & \ddots & \vdots \\
    \mathbf{g}_s & \mathbf{g}_{s+1} & \cdots & \mathbf{g}_{2s}
    \end{bmatrix}
    \]
    \STATE $\hat{H}^- \in \mathbb{R}^{ms \times ps} \leftarrow$ first $ps$ columns of $\hat{H}$
    \STATE $\hat{L} \in \mathbb{R}^{ms \times ps} \leftarrow$ rank $n$ approximation of $\hat{H}^-$ via SVD
    \STATE $U, \Sigma, V = \mathrm{SVD}(\hat{L})$
    \STATE $\hat{O} \in \mathbb{R}^{ms \times n} \leftarrow U \Sigma^{1/2}$
    \STATE $\hat{Q} \in \mathbb{R}^{n \times ps} \leftarrow \Sigma^{1/2} V^\top$
    \STATE $\hat{C} \leftarrow$ first $m$ rows of $\hat{O}$
    \STATE $\hat{B} \leftarrow$ first $p$ columns of $\hat{Q}$
    \STATE $\hat{H}^+ \in \mathbb{R}^{ms \times ps} \leftarrow$ last $ps$ columns of $\hat{H}$
    \STATE $\hat{A} \leftarrow \hat{O}^\dagger \hat{H}^+ \hat{Q}^\dagger$
    \RETURN $\hat{A} , \hat{B}, \hat{C}$
    \end{algorithmic}
    \normalsize
\end{algorithm}

%% file: ALG/alg_overview.tex
\begin{algorithm}[H]
   \small
    \caption{\method ($\datac, \cctr, \systemtensor_p, \regimeset_p$)}
    \label{alg:overview}
    \begin{algorithmic}[1]
    \STATE {\bf Input:} (a) Current data window $\datac=\{\datavec(t)\}$ \\
    \ \ \ \ \ \ \ \ \ \ \ \
    (b) Current exogenous window $\cctr=\{\ctrvec(t)\}$ \\
    \ \ \ \ \ \ \ \ \ \ \ \
    (c) Previous system tensor $\systemtensor_p$ \\
    \ \ \ \ \ \ \ \ \ \ \ \
    (d) Previous system parameters $\regimeset_p$
    \STATE {\bf Output:} (a) Current system parameters $\regimeset$ \\
     \ \ \ \ \ \ \ \ \ \ \ \ \ \ \
    (b) Updated system tensor $\systemtensor$ \\
    \ \ \ \ \ \ \ \ \ \ \ \ \ \ \
    (c) Future values $\fdata$

    \STATE /* \Mcollect: update global system tensor */
    \STATE $\systemtensor \gets \systemtensor_p$
    \STATE Set $k_{\max}$, and let $p = \datadim\cdim$, $D=k_{\max}p$
    \STATE Let $\mathcal{J}(k)$ denote the index range $\{(k-1)p+1,\dots,kp\}$

    \FORALL{$(k_1,k_2,k_3)\in\{1,\dots,k_{\max}\}^3$}
        \FORALL{admissible sub-window starts $\tau$ in the current window}
            \STATE $t_1 \gets \tau+k_1$
            \STATE $t_2 \gets \tau+k_1+k_2+1$
            \STATE $t_3 \gets \tau+k_1+k_2+k_3+2$
            \STATE $\tilde t_1 \gets \tau$, \ $\tilde t_2 \gets \tau+k_1+1$, \ $\tilde t_3 \gets \tau+k_1+k_2+2$
            \STATE $\mone \gets \mathrm{vec}\!\left(\datavec(t_1)\ctrvec(\tilde t_1)^\top\right)$
            \STATE $\mtwo \gets \mathrm{vec}\!\left(\datavec(t_2)\ctrvec(\tilde t_2)^\top\right)$
            \STATE $\mthree \gets \mathrm{vec}\!\left(\datavec(t_3)\ctrvec(\tilde t_3)^\top\right)$
            \STATE $\systemtensor[\mathcal{J}(k_1),\mathcal{J}(k_2),\mathcal{J}(k_3)] \gets
                   \systemtensor[\mathcal{J}(k_1),\mathcal{J}(k_2),\mathcal{J}(k_3)] + \mthree\otimes\mtwo\otimes\mone$
        \ENDFOR
    \ENDFOR

    \STATE /* Compute the error between the current system and the current data */
    \STATE Compute predictions $\{\estvec(j)\}$ from $\regimeset_p$ on the current window (e.g., via Kalman filtering; see Appendix~\ref{apseq:kalman}).
    \STATE $L_c \gets \sum^{t+t_n}_{j=t} \|\datavec(j)-\estvec(j)\|_2$

    \STATE /* The system $\regimeset_p$ is not updated if it sufficiently fits the current data. */
    \IF{$L_c < \rho$}
        \STATE $\regimeset \gets \regimeset_p$
    \ELSE
        \STATE /* \Madapt */
        \STATE $\{\fone_i, \ftwo_i, \fthree_i\}_{i=1}^{\trank} \gets$ CP-Decomposition($\systemtensor$)
        \FOR {$i=1$ to $\trank$}
            \STATE /* Construct Markov parameters $\mathbf{G}_i$ \cite{LDSmeetsCTR} */
            \STATE $\mathcal{T} \gets \fone_i\otimes\ftwo_i\otimes \fthree_i$
            \STATE Compute the Frobenius norm of $\mathcal{T}$ along modes 2--3 and obtain $\mathbf{v}_i$
            \STATE $\mathbf{G}_i \gets$ Reshape$\!\left(\mathbf{v}_i/\|\mathbf{v}_i\|^{2/3}\right)$
            \STATE $\iparams \gets$ Ho-Kalman($\mathbf{G}_i$)
        \ENDFOR
        \STATE /* Identify the system that best explains $\datac$ */
        \STATE $\regimeset \gets \underset{\iparams}{\argmin}\ \sum^{t+t_n}_{j=t} \|\datavec(j)-\estvec(j)\|_2$
    \ENDIF

    \STATE /* \Fpred */
    \STATE Compute $\hat{\statevec}(0)$ using $\regimeset$ (see Appendix~\ref{apseq:kalman}).
    \STATE Generate future values $\fdata$ from $\hat{\statevec}(0)$ and future inputs.
    \RETURN $\regimeset, \systemtensor, \fdata$
    \end{algorithmic}
    \normalsize
\end{algorithm}

%% file: 070limitations.tex
One limitation of our approach is its reliance on the tensor decomposition via ALS (Alternating Least Squares), which is known to be sensitive to initialization and may converge to local optima. However, in our streaming setting, the decomposition is only triggered when a significant mismatch between the current system and the new data is detected. Since the data typically evolves smoothly, we initialize ALS using the previous decomposition result, which significantly reduces the number of iterations and helps maintain stability in practice. Empirically, we observe that this warm-start strategy leads to consistent convergence with minimal sensitivity.

Another limitation is that our framework currently targets a mixture of linear time-delay systems. Extending it to nonlinear or hybrid systems is a promising direction for future work. One possible approach is to integrate kernelized Markov parameters or to replace linear system approximations with local surrogate models, which can better capture nonlinearity in physical processes.

Finally, although our current experiments focus on low- to medium-dimensional systems, scalability to high-dimensional settings (e.g., robotics with vision input or complex multi-agent control) remains an open challenge. Investigating low-rank tensor compression or structure-aware decomposition techniques may help address this in future extensions.

%% file: TABLE/table_numerical.tex
\begin{table*}[t]
\centering
\caption{Prediction error (mean $\pm$ std) for each method on different datasets and lookback steps $l_s$. \textbf{Bold} is best, \underline{Underline} is second best.}
\label{table:error_results_highlighted}
\resizebox{\linewidth}{!}{
\begin{tabular}{ll|cccc|cccc}
\toprule
\multicolumn{2}{c|}{\textbf{Dataset}} & \multicolumn{4}{c}{\textbf{MSE}} & \multicolumn{4}{c}{\textbf{MAE}} \\
\cmidrule(lr){1-2} \cmidrule(lr){3-6} \cmidrule(lr){7-10}
\textbf{Name} & $\boldsymbol{l_s}$ & DelayMix & TimeXer & ReLiNet & StableReLiNet & DelayMix & TimeXer & ReLiNet & StableReLiNet \\
\midrule
Ship-OOD & 1  & \textbf{0.2304 ± 0.1088} & \underline{0.4504 ± 0.1787} & 0.7965 ± 0.3896 & 0.7997 ± 0.317 & \textbf{0.3183 ± 0.09494} & \underline{0.4657 ± 0.1025} & 0.677 ± 0.1583 & 0.6715 ± 0.1199 \\
         & 10 & \textbf{0.6387 ± 0.1782} & \underline{0.889 ± 0.2697} & 4.813e+05 ± 5.735e+06 & 1.2 ± 0.4905 & \textbf{0.5666 ± 0.07572} & \underline{0.6882 ± 0.1069} & 19.84 ± 223.3 & 0.7561 ± 0.1219 \\
         & 30 & \textbf{1.254 ± 0.33} & \underline{1.745 ± 0.4817} & 4.07e+29 ± 4.884e+30 & 2.097 ± 1.425 & \textbf{0.8313 ± 0.1051} & \underline{1.002 ± 0.1327} & 3.931e+12 ± 4.717e+13 & 0.9169 ± 0.2002 \\
\midrule
Ship-IND & 1  & \textbf{0.1151 ± 0.05543} & \underline{0.2379 ± 0.5534} & 1.91 ± 2.912 & 1.708 ± 2.384 & \underline{0.2197 ± 0.05467} & \textbf{0.1884 ± 0.1081} & 0.8456 ± 0.5367 & 0.7767 ± 0.4528 \\
         & 10 & \textbf{0.3491 ± 0.3649} & \underline{0.5512 ± 1.323} & 8.847e+04 ± 1.063e+06 & 1.812 ± 2.473 & \underline{0.3423 ± 0.07987} & \textbf{0.2911 ± 0.1666} & 2.648 ± 16.24 & 0.6635 ± 0.4104 \\
         & 30 & \textbf{0.9965 ± 1.369} & \underline{1.657 ± 4.177} & 7.6e+24 ± 9.246e+25 & 3.567 ± 6.298 & \underline{0.5465 ± 0.1937} & \textbf{0.5097 ± 0.2965} & 3.907e+10 ± 4.754e+11 & 0.8255 ± 0.4678 \\
\midrule
Robot    & 1  & \textbf{0.2473 ± 0.1564} & \underline{0.2486 ± 0.137} & 1.148 ± 0.349 & 1.151 ± 0.1363 & \textbf{0.2831 ± 0.09666} & \underline{0.2857 ± 0.08596} & 0.8065 ± 0.06142 & 0.8152 ± 0.05008 \\
         & 10 & 0.8125 ± 0.2698 & \textbf{0.7588 ± 0.2776} & 2.62e+05 ± 2.021e+06 & \underline{1.581 ± 0.2555} & \underline{0.624 ± 0.09015} & \textbf{0.561 ± 0.1188} & 13.56 ± 93.35 & 0.9188 ± 0.06162 \\
         & 30 & \underline{2.088 ± 0.2065} & 2.442 ± 0.368 & 4.99e+27 ± 3.865e+28 & \textbf{2.07 ± 0.8314} & \textbf{1.065 ± 0.04473} & \underline{1.096 ± 0.09403} & 8.525e+11 ± 6.603e+12 & 0.9991 ± 0.08178 \\
\bottomrule
\end{tabular}
}
\end{table*}